\newcommand{\commentout}[1]{}
\newcommand{\wbox}{\mbox{$\sqcap$\llap{$\sqcup$}}}
\tikzset{wave/.style={decorate, decoration={snake, segment length=2mm, amplitude=0.3mm}}}
\theoremstyle{plain}
\newtheorem{observation}{Remark}[section]
\newtheorem{theorem}[observation]{Theorem}
\theoremstyle{definition}
\newtheorem{definition}[observation]{Definition}
\newtheorem{example}[observation]{Example}
\theoremstyle{remark}
\newtheorem{remark}[observation]{Remark}
\theoremstyle{plain}
\newtheorem{proposition}[observation]{Proposition}
\tikzset{>=Stealth}
\renewcommand\phi{\varphi}
\def\vec#1{\vv{#1}}
\def\cause#1#2{(#1) \rightsquigarrow #2}
\def\causesimp#1#2{#1 \rightsquigarrow #2}
\title{Security Properties as Nested Causal Statements}
\author{%
Matvey Soloviev\and
Joseph Y. Halpern\\
\affiliations
Cornell University\\
\emails
\{msoloviev, halpern\}@cs.cornell.edu
}
\begin{document}
\maketitle

\begin{abstract}
Thinking in terms of causality helps us structure how different parts
of a system depend on each other, and how interventions on one part of
a system may result in changes to other parts. Therefore, formal
models of causality are an attractive tool for reasoning about
security, which concerns itself with safeguarding properties of a
system against interventions that may be malicious. As we show, many
security properties are naturally expressed as nested causal
statements: not only do we consider what caused a particular undesirable
effect, but we also consider what caused this causal relationship
itself to hold. We present a natural way to extend the Halpern-Pearl
(HP) framework for causality to capture such nested causal
statements. This extension adds expressivity, enabling the HP
framework to distinguish between causal scenarios that it could not
previously naturally tell apart. We moreover revisit some design
decisions of the HP framework that were made with non-nested causal
statements in mind, such as the choice to treat specific values of
causal variables as opposed to the variables themselves as causes, and
may no longer be appropriate for nested ones. 
\end{abstract}

\section{Introduction}

Causality is a common feature of our discourse; indeed, it could be
argued that the notion that some circumstance is the cause of another
is fundamental to the way we make sense of the world around us,
providing both explanations of why things are the way they are
and guidance on how we
should act in order to influence their course.  
The standard approach to causality involves
\emph{counterfactuals}: had the cause
not occurred or occurred in a different way than it actually did,
the effect would not have come to pass. We are typically interested in
the effects of \emph{interventions}, which can be viewed as ways of
making the cause occur in a different way.

A system, such as a computer program, mechanism, or collection of
real-life entities such as the staff of a commercial business, is
considered to be \emph{secure} against a potentially 
malicious actor (``attacker'') if no action the attacker can take
could cause some desirable property of the system to be violated. By
viewing the actions available to the attacker as interventions,
this becomes a causal statement: no desirable
property can be violated \emph{because} of something the attacker
did or didn't do. 
We would like to use a formal account of causality to represent and
analyse security properties. This is an attractive approach because,
while many formal models of security have been proposed, especially
in the programming language research community
\cite{GM82,HY87,zdancewic01,SabelfeldMyers03}, these models are typically designed for the
purpose of analysing computer programs only, and are therefore tightly
coupled to a machine model such as state machines or sequences of
memory snapshots. 
In contrast, general-purpose formalisms for causality can 
capture and represent a wider range of scenarios,
including everyday events and their relations. This means
that a causal characterization of a security property can be
evaluated with respect to a real-world scenario, rather than only
the operation of some computer system. Since real-world scenarios are typically
closer to human intuition than computer programs, we expect
this to be helpful in understanding what a particular security
property ``really means''. 

A variety of formal models have been proposed for reasoning about
causal statements and formally defining what it means to be a cause. 
For definiteness, we use the Halpern-Pearl (HP) framework
\cite{Hal47,Hal48}. In this framework, we first represent the
relevant features of the world (``a switch, which can be either
on or off'', ``a lamp, which can be on or off'') as
variables, and the rules that govern their interdependency
(``the lamp is on exactly when the switch is in the on'')
as \emph{structural equations} to form a \emph{causal
  model}, which enables counterfactual reasoning: that is, a
comparison between the real world and a hypothetical alternative which
differs from the actual world in some relevant aspect. 

In the HP framework, potential causes are taken to be conjunctions of
atomic propositions about the values taken by variables, while effects
are taken to be Boolean combinations of atomic propositions.
However, in real-world discourse, we often encounter seemingly
more complex statements, including, in particular, ones where the
purported effect is itself a causal statement (``because I have paid
my electricity bill on time, flipping the light switch on causes the
lamp to turn on''). 
Nested causal statements of this form are particularly common when
discussing notions of authorisation, delegation, and endorsement.

For example, consider a government employee who is authorised to
publicly confirm a classified piece of information, say, that the
government has made contact with an alien civilization.
This employee is corrupt, and may release the information if he
is bribed. The release
of the information, which itself can be interpreted causally
as saying that the fact that contact was made is a cause
of the newspaper article saying that it is,
 is allowed; the employee is
authorised to release information. The bribe on its own is also not
necessarily bad; nothing prohibits acts of kindness towards strangers.
The problem here is that the gift of money caused the alien 
contact to be a cause of the newspaper article.
This corresponds to the notion of \emph{robust declassification}
from the security literature \cite{zdancewic01,MSZ04,nmifc}, which can
be interpreted as more generally saying that whenever secret
causes have public effects, this must not itself 
be due to untrustworthy causes.
 The underlying notion of robustness can
be taken more generally to denote that some security-relevant
circumstance, which could be primitive or itself involve causality, is
not caused by an untrusted party. 

Conversely, nested causes may also render normally unacceptable
causal relationships acceptable. For example, if $A$
performs an action that infringes on $B$'s possessions,
such as redecorating their office, shredding papers, or
changing the settings of $B$'s computer, and $B$ finds this objectionable,
a common defense that $A$ might invoke is to say that they would have stopped
if $B$ had told them to (and $B$ had the opportunity to do so).
In other words, whatever causal relationship there was between
$A$'s intentions and $B$'s property held only because $B$
didn't voice an objection, and thus implicitly endorsed the act.
We can view this line of reasoning, where
an unauthorised cause has a certain effect
because of an authority's implicit or explicit go-ahead, 
as a form of authorisation. The construction
can easily be nested, obtaining examples
where one authority holds a veto over another authority's ability
to hold a veto over a causal relationship, and so on. There
is extensive literature on reasoning about authorisation chains
of this kind using \emph{authorisation logics} \cite{abadiacl},
and nested causality can be used as a way to 
interpret them.

The HP definition of causality does not deal with
the nested causality statements of the type that occur in the examples above.
Fortunately, the HP definition can be extended without change to apply
to nested statements, and seems to give sensible results.
Using nested causality allows us to distinguish causal
scenarios that appear different in a security-relevant way that
cannot be distinguished without nesting unless we 
to unnatural edge cases of the definitions (Theorem \ref{thm:expressive}).
Moreover, having effects that themselves involve counterfactuals
introduces new considerations that are, in a precise sense,
irrelevant to the ``simple'' causal statements the HP model was
designed for.
These considerations suggest a further modification
to the HP definition.
Specifically, the HP definition assumes that both cause and
    effect are formed from Boolean combinations of atomic propositions
    of the form $X=x$: variable $X$ has value $x$.  Thus, it does not say
 ``the switch's position is the cause of whether the
  lamp was on'', 
  but rather ``the switch being on is the cause of the lamp
  being on''.
  The latter statement is more idiomatic in natural language,
but
provides no more information than the former: the HP definition
of actual causality implies that the statement
    ``$X=x$ is a cause of $Y=y$'' can be
  true 
  for only one particular value of $x$ and $y$, namely 
  the values of $X$ and $Y$ in the actual context.

  The picture becomes more interesting with nested causality. For
example, imagine an American vegetable grower who relocated to
Texas. As it happens, the year was marred by climatic irregularities;
while the southern states, including Texas, experienced a drought,
 all remaining states, including the grower's state of origin,
were subject to catastrophic floods instead. The grower now makes the
following statement: ``Because we moved to Texas, the weather caused
our crop to fail.'' This is 
arguably false: had the grower not moved to Texas,
the flooding would have led to crop failure all the same. How would we
make sense of this in the HP framework? What value of the variable
weather are we referring to here?  Naively extending upon the previous
observation, the first thought would be to use the value of weather 
in the actual context, so it becomes ``Because we moved to Texas, the
weather being dry caused our crop to fail.'' But this statement
is true: had the grower stayed in New York, the weather would not have
been dry, and so dry weather could not have caused crop
failure. Plugging in another constant value does not work either;
while ``Because we moved to Texas, the weather being
  very wet caused our crop to fail'' is false, 
due to the statement ``the weather being very wet caused our crop to
fail'' itself being false.  This statement can't be the intended
meaning of ``the weather caused our crop to fail'', because we would
normally take the latter statement to be true! 

We claim that the most reasonable interpretation of this type of statement
is instead as a form of causality that is independent of the concrete
value that the weather takes. This can be interpreted in terms of the
HP notion in terms of existential quantification: ``there exists a
state $v$ of the weather such that the weather being $v$ caused our
crop to fail''.
We return to this point in Section~\ref{sec:variableascauses}.

\if 0
\item We noted earlier that the HP definition of causality involves
  comparing the actal world to particular counterfactual scenarios. 
  These counterfactual scenarios are always formed by singling out
  certain variables and assigning them a particular fixed value,
  different from the one they took in the real world and independent
  of whatever real-world mechanisms gave rise to their real value. For
  example, consider a world where it rained, the street got wet with
  rain and I slipped on the wet street. Was the street's wetness a
  cause of me slipping? Typically, we would determine this by
  considering the counterfactual setting in which the street was
   not wet, even though it rained, and I consequently did not
  slip, regardless of how meaningful this situation seems. We do not
  consider more involved counterfactual settings in which the street's
  behaviour is overridden by a nontrivial counterfactual rule, such as
  saying that the street is wet if and only if it does \emph{not}
  rain. 

For simple causal statements, even if we were to consider such
``equational counterfactuals'', we can prove that this would not
change anything (see Section~\ref{sec:eqation}): the causal statements
that hold when 
considering arbitrary equational counterfactuals are exactly those
that hold when we consider only ones in which variables were
overridden with fixed values. This reasoning does not hold up for
nested causal statements; in fact, there is reason to believe that we
might need to work with a more general set of counterfactuals in that
case. Consider the following example: 
(example cancelled, I need to work out a better one and am too mentally blocked at the moment)
\end{itemize}
\fi

\section{Review of the HP framework}

We first review the Halpern-Pearl notion of causality. 
The first step is to define 
causal models.

\def\s#1{\mathcal{#1}}

\begin{definition} \label{defn:causalmodel}
  A \emph{causal model} is a pair $(\s{S},\s{F})$,
  consisting 
  of a \emph{signature}
$\s{S}$ and a collection of \emph{structural equations} $\s{F}$ for
this signature. 
The signature $\s{S}$ is a triple $(\s{U},\s{V},\s{R})$; $\s{U}$ is a
nonempty finite
set of \emph{exogenous variables}, to be thought of as external inputs
to the model, or features of the world whose values are determined
outside the model; $\s{V}$ is a nonempty finite set of \emph{endogenous
  variables}, whose 
causal dependencies on each other and on the inputs we wish to
analyse; each variable $W\in \s{U}\cup \s{V}$ can take values from a
finite
range $\s{R}(W)$; $\s{F}$ associates with each endogenous variable $V\in
\s{V}$ a function denoted $F_V$ that determines the value of $V$
in terms of the values of all the other variables in $\s{U} \cup \s{V}$;
thus,  $F_V : \prod_{W\in (\s{U}\cup\s{V} -V)} R(W) \rightarrow
\s{R}(V)$.  We typically write, say, $V = U+X$ rather than $F_V(u,x) =
u+x$ for all $u \in \s{R}(U)$ and $x \in \s{R}(X)$.
\hfill \wbox
\end{definition}

A variable $V$ \emph{depends on} $W$ if the structural
equation for $V$ nontrivially depends on the value taken by $W$:
that is, there are some settings $\vec{z}$ and $\vec{z}'$ of the
variables in $\s{U}$ and $\s{V}$ other than $V$ that 
only differ in the entry corresponding to $W$ such that
$F_V(\vec{u})\neq F_V(\vec{u}')$. 
Note that this notion represents only \emph{immediate}
dependency, and is not transitive: $V$ depending on $W$ and $W$ depending
on $U$ does not imply that $V$ depends on $U$.
 In this paper, as is typical in the literature, we consider only
 models where the dependence relation is acyclic.  It follows that
 given a \emph{context}, that is, a setting of the exogenous
 variables, the variables of all the endogenous variables are
 determined by the equations.

 \begin{example}
\label{ex:andlamp}
The model $M^{\wedge \mathrm{lamp}}$ has an exogenous variable $U$
whose range is a singleton $\s R(U)=\{u\}$
and three endogenous variables
$\s{V}=\{\textsc{Switch1},\textsc{Switch2},\textsc{Lamp}\}$,
all of whose ranges are 
$\{\textrm{on},\textrm{off}\}$.
In context $u$, 
$\textsc{Switch1}=\textsc{Switch2}=\textrm{on}$,
and the structural equation for $\textsc{Lamp}$ is
$$ 
\textsc{Lamp}
 = \begin{cases} \textrm{on} & \text{ if
    \textsc{Switch1}=on and \textsc{Switch2}=on}  \\ \textrm{off} &
  \text{ otherwise.} \end{cases} $$ 
\hfill \wbox
\end{example}

\def\LI{\mathrm{LI}}
In order to reason about causal models, HP define the following
language, which we will refer to as
$\LI$. 
We
start with atomic propositions of the form $X=x$, where $X$ is an
endogenous variable and $x \in \s{R}(X)$, and 
close off under conjunction and negation, and 
all   formulas of the form 
$\vec{[X} \gets \vec{x}]\phi$, where $X$ is a vector of endogenous
variables, which says that after intervening to set the variables in
$\vec{X}$ to  $\vec{x}$, $\phi$ holds.

To interpret the truth of a statement in a world described by a causal
model, we need to determine what values the variables can actually
take, given the structural equations and the values taken by exogenous
variables.
Given a context $\vec{u}$ and a setting $\vec{v}$ of the endogenous variables,
we say that the pair $(\vec u,\vec v)$ is 
\emph{compatible} with $M$ if the entry in  $\vec{v}$ for each
variable $V \in \s{V}$ 
is \emph{compatible} with $M$, that is, 
$\vec{v}(V) = F_V(\vec{u},\vec{v} - \{\vec{v}(V)\})$ for all $V
\in s(V)$. For models where the depends-on relation is acyclic,
as we assume here, there is a unique $\vec{v}$ for each $\vec{u}$
such that $(\vec u,\vec v)$ is compatible with $M$.

We can now give a semantics to propositional formulae
involving atomic propositions of the form $V=x$, where $V\in \s{V}\cup
\s{U}$, taken to mean that the variable $V$ takes value
$x$. Recursively, we set $(M,\vec u)\vDash X=x$ if the $X$-indexed
entry of $(\vec{u},\vec{v})$ has value $x$ 
when $\vec{v}$ is the unique assignment such that $(\vec u,\vec v)$ is
compatible with $M$.
We say $M \vDash X=x$ when $(M,\vec u)\vDash X=x$ for \emph{all} $\vec u$.
We can extend $\vDash$ to Boolean combinations of atomic formulas,
as well as quantifications over values (``$\forall v\in \s{R}(V)$'')
in the obvious way.

To give semantics to formulas of the form $[X\gets x]\phi$ in $M$, we
first need to define the model $M_{\vec X\leftarrow \vec x}$.
The model $M_{\vec{X} \gets \vec{x}}$ is just like $M$, except that the
structural equations for the variables in $\vec X$ are replaced by the
corresponding entries of $\vec x$;
that is, the equation for $X \in \vec{X}$ becomes $X=x$.
We then set $(M,\vec u)\vDash [\vec X\leftarrow \vec x] \varphi$ iff
$(M_{\vec X\leftarrow \vec x},\vec u)\vDash \varphi$. We read this
formula as ``if the variables $\vec{X}$ were set to the values $\vec{x}$,
then $\varphi$ would be true''. 

For example,
  the model $M^{\wedge \mathrm{lamp}}_{\textsc{Switch1}\leftarrow
    \textrm{off},\textsc{Lamp}\leftarrow \textrm{on}}$ is the same as
    $M$, 
except that in it, $F_{\textsc{Switch1}}=\textrm{off}$ and
$F_{\textsc{Lamp}}=\textrm{on}$. In particular, in this model,
whether the lamp is on does not depend on the state of either of the
switches.
We also have
$$M^{\wedge \mathrm{lamp}}\vDash
\textsc{Lamp}=\textrm{on} \wedge [\textsc{Switch1}\leftarrow
  \textrm{off}] \textsc{Lamp}=\textrm{off}:$$ the lamp is on, and if
the first switch were set to be switched off, the lamp would be off.

As is standard, we define $M\vDash \varphi$ to mean that
$(M,\vec u)\vDash \varphi$ for all contexts $\vec u$.
In particular, if there is only a single possible context $u$ (as is
the case in many of our examples),
then it is equivalent to $(M,u)\vDash \varphi$.
Moreover, since the ranges of all variables are finite,
we can take 
$\exists v\in \s{R}(X).\, \varphi$ to be syntactic sugar for the
disjunction $\bigvee_{x\in \s{R}(X)} \varphi[x/v]$, that is, the 
formula that is true iff it is true with some value from the range of $X$
substituted for $v$. When the range is clear from the context,
we may simply write $\exists v.\, \varphi$.

We can now state the HP definition of causality.  There are actually
three definitions (see \cite{HPearl01a,HP01b,Hal47,Hal48}).  We
consider the most recent one \cite{Hal47,Hal48}, called the
\emph{modified HP definition}, since it is simplest and seems most
robust. 

\begin{definition} \label{def:acm} $\vec{X}=\vec{x}$ is an
    \emph{actual cause} of $\varphi$ in $(M,\vec{u})$ if  
    \begin{description}
      \item[AC1.] $(M,\vec{u})\vDash \varphi$ and $(M,\vec{u})\vDash
  \vec{X}=\vec{x}$; 
\item[AC2.] There is a set $\vec{W}$ of variables in $\mathcal{V}$
  and a set of alternative values $\vec{x}'$ for the variables in
    $\vec{X}$ such that if $(M,\vec{u})\vDash \vec W = \vec w^*$, then
  $(M,\vec{u})\vDash [\vec{X}\leftarrow \vec{x}',\vec{W}\leftarrow
    \vec w^*]\neg \varphi$. 
  \item[AC3.] $\vec{X}$ is minimal; there is no strict subset $\vec{X}'$ of
 $\vec{X}$ for which AC2 holds. 
        \end{description}
    We write $\cause{\vec{X}=\vec{x}}{\varphi}$
to represent $\vec{X}=\vec{x}$ is a cause of $\phi$, so that
$(M,\vec{u}) \vDash \cause{\vec{X}=\vec{x}}{\varphi}$ if AC1--3 hold.
Note that conditions AC1--3 are all expressible in the language LI.
\hfill \wbox
\end{definition}

In Definition~\ref{def:acm}, HP assume that $\varphi$ is a Boolean
combination of atomic propositions.
In particular, $\varphi$ may not itself
be a formula of the form $\cause{\vec{X}=\vec{x}}{\varphi}$.
Since we want to reason about the expressive power of this
notion of causality, it is useful to explicitly define the language
obtained by augmenting propositional logic with it.
\def\LCone{\mathrm{LC}_1}
\begin{definition} \label{def:lcone}
A formula $\varphi$ is \emph{simple} if it is a Boolean combination
of atomic propositions of the form $X=x$.

A formula $\psi$ is a \emph{simple causal formula} if it is a Boolean
combination 
of atomic propositions of the form $X=x$ and causal statements
of the form $\cause{\vec{X}=\vec{x}}{\varphi}$, where $\varphi$ is simple.
The language of all simple causal formulae is called $\LCone$.
\hfill \wbox
\end{definition}

We give semantics to formulae in $\LCone$ by
converting them to formulas in $\LI$ and using  Definition~\ref{def:acm}.

\section{Nested causal statements}
\label{sec:nested}

Our goal is to investigate the role of nested causal statements such as
``$\vec A=\vec a$ is a cause of $\vec B=\vec b$ being a cause of $\varphi$''.
These statements have no formal counterpart in $\LCone$, but we can
define a language which includes them.
\newpage
\def\LCinf{\mathrm{LC}_\infty}
\begin{definition}
The language $\LCinf$ of \emph{nested causal formulae} is defined recursively
as follows:
\begin{itemize}
\item Simple formulae $\varphi$ are in $\LCinf$.
\item If $\varphi$ is in $\LCinf$, then so is $\cause{\vec{X}=\vec{x}}{\varphi}$.
\item Boolean combinations of formulae in $\LCinf$ are in $\LCinf$.
\end{itemize}
\hfill \wbox
\end{definition}
Since Definition~\ref{def:acm} does not depend on the structure of
$\varphi$ in $\cause{\vec X=\vec x}{\varphi}$, we can once again
use it to give a semantics to $\LCinf$ by evaluating the corresponding
formula in $\LI$.
As we now show, the inclusion of nested causal statements results in
$\LCinf$ being more expressive than $\LCone$
once we exclude a particular set of undesirable formulae.

\def\Mand{M^{\wedge\mathrm{lamp}}}
\def\Mnxor{M^{\neg\oplus\mathrm{lamp}}}

Let 
$M^{\neg\otimes\mathrm{lamp}}$ be the same as the model $M^{\wedge\mathrm{lamp}}$
from Example \ref{ex:andlamp}, except that 
$$ F_{\textsc{Lamp}} = \begin{cases} \textrm{on} & \text{ if
    \textsc{Switch1}=\textsc{Switch2}}  \\ \textrm{off} & \text{
    otherwise,} \end{cases} $$ 
so the lamp is on if both or neither switch is.
Intuitively, the two models $\Mand$ and $\Mnxor$ are quite distinct.
If we think of each model as representing different setups in which two people each control a light switch, then 
in $\Mand$, each participant has a veto on whether the light is on: if they choose to keep their switch ``off'',
then nothing the other person can do has any bearing on whether the light is on or not. On the other hand,
in $\Mnxor$, by flipping their own switch, each person can only temporarily toggle the light, perhaps to mess
with the other participant, but has no way of ensuring that the light will permanently remain in any particular state.
In security parlance, we could think of this as saying that in $\Mand$, each person has to independently
authorise the other to be able to influence the light. 

We want to show that natural simple causal formulae are
not sufficiently expressive to capture the difference between $\Mand$
and $\Mnxor$.
The
qualification \emph{natural}, however, does some work here: in order
to make this statement precise, we need to restrict the set of 
causal formulas that we consider.  
Specifically, 
the  intuition above  does not necessarily hold for some causal
statements where cause and effect refer to the same thing (such as
``it is raining because it is warm and raining'').
We typically do not make such statements; it seems strange to say
``$X=x$ is a cause of $X=x$'' or ``$X=x$ is a cause of $X=x$ and $Y=y$''.

\def\LConeNC{\LCone^\mathrm{nc}}
\def\LCinfNC{\LCinf^\mathrm{nc}}
\begin{definition} For a given formula $\phi$ 
let $\mathrm{Vars}(\phi)$ denote the set of variables (each $X$ in the atom $X=x$) in $\phi$.
    The formula $\causesimp\phi\psi$ is \emph{circular} if
$\mathrm{Vars}(\phi)\cap \mathrm{Vars}(\psi)\neq \varnothing$. 
    Let $\LConeNC$ and $\LCinfNC$ denote the \emph{n}on-\emph{c}ircular fragments
    of $\LCone$ and $\LCinf$ respectively.
\hfill \wbox
\end{definition}

As we now show, we cannot distinguish $\Mand$ and $\Mnxor$ using
non-nested non-circular formulae, but with nested non-circular
formulae, we can.

\begin{theorem} \label{thm:expressive} 
For all $\phi$ in $\LConeNC$, 
$$\Mand \vDash \phi \text{ iff } \Mnxor \vDash \phi,$$
but there exists
a nested non-circular causal statement $\psi\in \LCinfNC$ such that
$\Mand \vDash \psi \text{ and } \Mnxor \vDash \neg\psi.$
\end{theorem}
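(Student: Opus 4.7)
The plan is to handle the two claims separately. For the first claim, I would start from the observation that $\Mand$ and $\Mnxor$ share the same equations for $\textsc{Switch1}$ and $\textsc{Switch2}$ and agree on $F_{\textsc{Lamp}}$ except at the single input configuration $(\textsc{Switch1},\textsc{Switch2})=(\textrm{off},\textrm{off})$, where $\Mand$ yields $\textsc{Lamp}=\textrm{off}$ and $\Mnxor$ yields $\textsc{Lamp}=\textrm{on}$. Proceeding by structural induction on $\phi\in\LConeNC$, atomic propositions and Boolean combinations are handled immediately, so the work reduces to verifying simple causal formulae $\cause{\vec X=\vec x}{\varphi}$ with $\varphi$ simple and non-circular.

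The key observation for the causal-statement case is that, by AC2, the auxiliary set $\vec W$ is pinned to its actual values $\vec w^*$, and both switches are actually $\textrm{on}$; hence the distinguishing $(\textrm{off},\textrm{off})$ configuration can arise only when both switches lie in $\vec X$. By non-circularity, $\varphi$ is then a Boolean combination of atoms over $\{\textsc{Lamp}\}$, so up to equivalence $\varphi\in\{\textsc{Lamp}=\textrm{on},\textsc{Lamp}=\textrm{off},\top,\bot\}$. A short case analysis shows that for $\varphi=\textsc{Lamp}=\textrm{on}$ the singleton $\{\textsc{Switch1}\}$ with $\vec x'=\textrm{off}$ already witnesses AC2 in both models (flipping one switch with the other at $\textrm{on}$ drives the lamp to $\textrm{off}$ in either model), so $\{\textsc{Switch1},\textsc{Switch2}\}$ violates AC3; the other three cases fail AC1 or AC2 identically in both models. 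Whenever $\vec X$ does not contain both switches, the interventions considered in AC1--AC3 bypass the distinguishing configuration and produce identical endogenous assignments, so the truth value of the causal statement matches.

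For the nested witness, take $\chi:=\cause{\textsc{Switch1}=\textrm{on}}{\neg\cause{\textsc{Switch2}=\textrm{on}}{\textsc{Lamp}=\textrm{off}}}$ and let $\psi:=\neg\chi$; every embedded cause has disjoint cause and effect variables, so $\psi\in\LCinfNC$. The inner causal statement fails AC1 in the actual context (the lamp is $\textrm{on}$), so the outer effect is trivially true and AC1 of $\chi$ holds in either model. For the critical AC2 of $\chi$, intervene $\textsc{Switch1}\gets\textrm{off}$: the lamp becomes $\textrm{off}$ in both models, so AC1 of the inner cause now holds in the intervened model. The inner AC2 requires a further flip $\textsc{Switch2}\gets\textrm{off}$ to drive the lamp to $\textrm{on}$: in $\Mnxor$ both switches off yields lamp on, so the inner cause is true, the outer effect becomes false, and $\chi$ is true in $\Mnxor$; in $\Mand$ both switches off still yields lamp off, so the inner AC2 is unsatisfiable, and after checking the remaining outer $\vec W$ choices (which only force already-actual values or turn the lamp $\textrm{on}$, either way keeping the inner AC1 or AC2 unsatisfied), $\chi$ is false in $\Mand$. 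Consequently $\psi=\neg\chi$ is the required formula: true in $\Mand$ and false in $\Mnxor$.

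The principal obstacle is the case analysis in the first claim, where one must rigorously verify that non-circularity together with the minimality condition AC3 precludes every avenue by which a non-nested causal statement could probe the $(\textrm{off},\textrm{off})$ configuration. The distinguishing $\psi$ is then essentially forced by the analysis: the only way to smuggle the required double intervention into the semantics without naming both switches in a single $\vec X$ (which AC3 would invalidate) is to have each switch appear at a different level of nesting, which $\LCinfNC$ permits and $\LConeNC$ does not.
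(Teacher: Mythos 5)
Your proof is correct. The first half is essentially the paper's argument, though your organization is arguably cleaner: you isolate the single input configuration $(\mathrm{off},\mathrm{off})$ on which $F_{\textsc{Lamp}}$ differs between the two models, observe that AC2's freezing of $\vec{W}$ at actual values means this configuration is reachable only when both switches sit in $\vec{X}$, and then kill that case with non-circularity plus AC3 exactly as the paper does. Where you genuinely diverge is the distinguishing formula. The paper uses $\textsc{Switch1}=\mathrm{on}\rightsquigarrow(\exists v.\,\textsc{Switch2}=\mathrm{on}\rightsquigarrow\textsc{Lamp}=v)$, which is true in $\Mand$ and false in $\Mnxor$; you instead take the negation of $\cause{\textsc{Switch1}=\mathrm{on}}{\neg\cause{\textsc{Switch2}=\mathrm{on}}{\textsc{Lamp}=\mathrm{off}}}$. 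Both witnesses exploit the same phenomenon (in $\Mand$, setting $\textsc{Switch1}\leftarrow\mathrm{off}$ destroys all causal influence of $\textsc{Switch2}$ on the lamp, while in $\Mnxor$ it merely changes which value of $\textsc{Lamp}$ is caused), and your verification of both sides is sound. What your version buys is the observation that the two models can be separated in $\LCinfNC$ using only fixed values and Boolean structure, with no existential quantification over the lamp's value; this is worth noting because the discussion following the theorem, and Section~4, use the apparent indispensability of the existential to motivate treating variables rather than facts as causes. Your witness shows the existential is a matter of naturalness rather than of bare distinguishing power, which slightly weakens that rhetorical point even though it fully establishes the theorem as stated.
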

\begin{proof}
We start by showing that non-circular formulae cannot distinguish the models.
By exhaustive checking, we can confirm that $\Mand\vDash X=x$ iff $\Mnxor\vDash X=x$ for all $X\in \s{V}$ and $x\in  \s{R}(X)$.
It easily follows that $\Mand\vDash \varphi$ iff
$\Mnxor\vDash \varphi$ for simple formulae.

Formulae in $\LConeNC$ are Boolean combinations of atomic propositions
of the form 
$X=x$ and
non-circular causal formulae of the form $\cause{\vec X=\vec
  x}{\phi}$, where $\phi$ is a simple formula. 
Hence, if we can also establish that causal formulas are valid in
$\Mand$ iff they are 
valid in $\Mnxor$,
the result easily follows.
We do this by considering a number of cases.

If the variable $\textsc{Lamp}$ does not occur in
$\phi$, then the causal formula is false in both $\Mand$ and
$\Mnxor$.  To see this, note that
by non-circularity, $\phi$ can mention only
$\textsc{Switch1}$ 
and $\textsc{Switch2}$, but neither of these variables depends on any
other variable in either model. By non-circularity, $\vec{X}$ does not
mention the variables in $\phi$, so no change to the value of
$\vec{X}$ change the value of $\textsc{Switch1}$ or $\textsc{Switch2}$
(even if some variables $\vec{W}$ are fixed to their actual values).
Thus, AC2 cannot hold in either model, so the causal formula is false
in both models.

Now suppose that $\textsc{Lamp}$ occurs in $\phi$,
and hence $\textsc{Lamp}\not\in \vec X$. 
Consider the possible cases for $\vec{X}$.
\commentout{
\begin{itemize}
\item If $\vec{X}$ is empty, then the causal statement is false in every model, because $[]\varphi \Leftrightarrow \varphi$.
\item If $\vec{X}$ contains only one of $\textsc{Switch1}$ and $\textsc{Switch2}$, then the only nontrivial interventions (those not having $[\vec X\leftarrow \vec x]\varphi$ iff $\varphi$, and hence (AC1)$\Rightarrow\neg$(AC2)) that are minimal set the respective switch to ``off''. Since $\Mand_{\textsc{Switch1}\leftarrow \mathrm{off}} \vDash \textsc{Lamp}=\mathrm{off} \mathop{\reflectbox{\text{$\vDash$}}} \Mnxor_{\textsc{Switch1}\leftarrow \mathrm{off}}$ and likewise for $\textsc{Switch2}$, by induction on Boolean combinations, the same simple formulae $\varphi$ are valid in both intervened-upon models,
and hence the same statements of the form $[\textsc{Switch}i\leftarrow \mathrm{off}] \varphi$ are.
\item If it contains both, then minimality (AC3) fails, so the causal statement is false in both: the only atoms $\phi$ can include are those imputing some value to \textsc{Lamp}, and in both $\Mand$ and $\Mnxor$, that value can only change in one way which is already achieved by just changing one of the two switches.
\end{itemize}
\end{itemize}
}
If $\vec{X} = \vec{x}$ contains either $\textsc{Switch}1 =
\mathrm{off}$ or $\textsc{Switch}2 = \mathrm{off}$, then $\vec{X} =
\vec{x}$ is false in both models, so AC1 fails, and the causal formula
is false in both models.  If $\vec{X} = \emptyset$, then AC2 must fail
in both models (since no change in the value of a variable in
$\vec{X}$ can cause a change in the truth value of $\phi$.  If
$\vec{X} = \vec{x}$ is  $\textsc{Switch}1 = \mathrm{on}$, then changing 
$\textsc{Switch}1$ to  $\mathrm{off}$ (while possibly keeping some
variables fixed at their actual values) has the same effect on the
truth value of all the variables in both models, so AC2 will either
hold in both models or in neither, and AC3 trivially holds in both.  A
similar argument works if 
$\vec{X} = \vec{x}$ is  $\textsc{Switch}1 = \mathrm{on}$.  Finally, if 
$\vec{X} = \vec{x}$ is  $\textsc{Switch}1 = \mathrm{on} \land
\textsc{Switch}1 = \mathrm{on}$, then the causal formula must be false
in both models.
Either at least one of AC1 and AC2 is violated,
or we must have $\varphi \Leftrightarrow \textsc{Lamp}=\mathrm{on}$ in
both models
as AC1 necessitates $\Mand,\Mnxor\vDash \varphi$,
AC2 necessitates $\Mand,\Mnxor\vDash [\textsc{Switch}1 \leftarrow v',
  \textsc{Switch}2\leftarrow w'] \neg \varphi$, 
and by non-circularity, $\varphi$ can mention only $\textsc{Lamp}$.
But in both models, $[\textsc{Switch}i\leftarrow \mathrm{off}] \neg (\textsc{Lamp}=\mathrm{on})$
for $i\in \{1,2\}$, so $\textsc{Switch}i=\mathrm{on}$ is already a
cause of $\textsc{Lamp}=\mathrm{on}$ and AC3 (minimality) is violated.

We now show that the models can be distinguished by nested non-circular
formulae. We claim that
\begin{align*}
\Mand \vDash & \,\, \textsc{Switch1}=\mathrm{on} \\ 
& \rightsquigarrow ( \exists v.\, \textsc{Switch2}=\mathrm{on}
\rightsquigarrow \textsc{Lamp}=v ), 
\end{align*}
but
\begin{align*}
\Mnxor \not\vDash & \,\, \textsc{Switch1}=\mathrm{on} \\ 
 & \rightsquigarrow ( \exists v.\, \textsc{Switch2}=\mathrm{on} \rightsquigarrow \textsc{Lamp}=v ).
\end{align*}
In both models, both  $\textsc{Switch1}=\mathrm{on}$ and
 $\textsc{Switch2}=\mathrm{on} \rightsquigarrow
\textsc{Lamp}=\mathrm{on}$ are valid,
as intervening to set $\textsc{Switch2}\leftarrow \mathrm{off}$ results in $\textsc{Lamp}=\mathrm{off}$.
However, if we intervene to set $\textsc{Switch1}\leftarrow \mathrm{off}$, we have
$\Mand \not\vDash \textsc{Switch2}=\mathrm{on} \rightsquigarrow \textsc{Lamp}=v$ for any $v$,
as $\textsc{Lamp}=\mathrm{off}$ regardless of the setting of $\textsc{Switch2}$.
This is not the case in $\Mnxor$, as there we have 
$$\Mnxor\vDash [\textsc{Switch1}\leftarrow\mathrm{off}](\textsc{Switch2}=\mathrm{on} \rightsquigarrow
\textsc{Lamp}=\mathrm{off}).$$ 
Intervening further to set $\textsc{Switch2}\leftarrow\mathrm{off}$ results in the lamp turning
back on, as both switches are in the same position again.
\end{proof}

\begin{remark}
  It is worth noting that non-circularity is really a necessary condition here.
  Let $\phi$ be the formula $\textsc{Switch1}=\mathrm{on}\wedge
\textsc{Switch2}=\mathrm{on}\wedge \textsc{Lamp}=\mathrm{on}$; let
$\psi$
be the formula $\textsc{Switch1}=\mathrm{on}\vee
 \textsc{Switch2}=\mathrm{on}\vee \textsc{Lamp}=\mathrm{on}$.
 Note that the formula $\phi \rightsquigarrow \psi$ is circular.
 Moreover, $\Mnxor \vDash \phi \rightsquigarrow \psi$, 
as we can intervene to set all the variables to \textit{off}, but
 $\Mand \not\vDash \phi\rightsquigarrow \psi$,
  because AC3 is violated: we have 
$\Mand \vDash
 \textsc{Switch1}=\mathrm{on}\wedge \textsc{Switch2}=\mathrm{on}\ 
 \rightsquigarrow \psi$.
The corresponding
 causal statement does not hold in $\Mnxor$ because
$\textsc{Lamp}=\mathrm{on}$ after setting both switches
\textit{off}. 

In fact, as we show in the appendix, similar
circular formulae in $\LCone$ can
distinguish any two distinct causal models.
\hfill \wbox
\end{remark}

While Theorem \ref{thm:expressive} formalizes our claim that
$\Mand$ and $\Mnxor$
can be distinguished by nested causal formulae,
the existential quantification in the distinguishing
statement makes it somewhat difficult to understand exactly what it is
about the models that is different.
The formula seems to be saying that the first switch being on
is a cause of the second switch being on causing \emph{something}
about the lamp.

This \emph{something} can not be expressed as \textsc{Lamp} taking
a particular value: if we took it to be $\textsc{Lamp}=\mathrm{on}$, then
the resulting nested causal statement would be valid in both
$\Mand$ and $\Mnxor$ by AC2, as after intervening to set $\textsc{Switch1}\leftarrow \mathrm{off}$,
$\textsc{Lamp}$ is not $=\mathrm{on}$ anymore, and so the effect
$\textsc{Switch2}=\mathrm{on}\rightsquigarrow \textsc{Lamp}=\mathrm{on}$ is
becomes false by AC1.
On the other hand, if we took it to be $\textsc{Lamp}=\mathrm{off}$, then
the nested causal statement would be invalid in both,
as AC1 requires that the effect $\textsc{Switch2}=\mathrm{on}\rightsquigarrow \textsc{Lamp}=\mathrm{off}$
is valid in the real world, and another application of AC1
necessitates $\textsc{Lamp}=\mathrm{off}$, but the lamp is really on.

In the next section, we argue that
the deliberately fuzzy wording (``\ldots causing something
about the lamp'') actually captures the existential quantification,
which sidesteps the issue of not being able to choose a fixed value,
and that in the presence of nested causality, it turns out to
sometimes make sense
to avoid committing to particular values.

\section{Variables, rather than facts, as causes}
\label{sec:variableascauses}

We previously observed that out of all
the possible causal statements of the form $\cause{A=a}{(C=c)}$, where
$a\in
\mathcal{R}(A)$, $c\in \mathcal{R}(C)$, only one is potentially true,
namely the one where $a$ and $c$ are the actual values that $A$ and
$C$, respectively, take in the context. There is therefore a sense in
which specifying $a$ and $c$ is redundant: we could unambiguously
interpret a formula like $\causesimp{A}{C}$ as meaning
$\cause{A=a}{(C=c)}$ in each context $u$ with $a$, $c$ such that
$(M,u)\vDash A=a \wedge C=c$.  
Things are not so simple in the case of nested causality.
Suppose that, in the causal formula, 
$\cause{\vec C=\vec c}{\varphi}$,
$\phi$ itself is a causal formula of the form $\cause{\vec A=\vec
    a}{(\vec B=\vec b)}$.
\commentout{
It follows from  AC1 that this formula is false if either 
$\vec A=\vec a$ or $\vec B=\vec b$ is false.
As in the proof of Theorem \ref{thm:expressive},
depending on the setting of $\vec C$, this could fail because $\vec A$
or $\vec B$ depends on $\vec C$. We would then find that
 $\cause{\vec C=\vec c}{(\cause{\vec A=\vec a}{\vec B=\vec b})}$ is actually
false for a reason such as ``if $\vec C$ were instead set to $\vec c'$, $\vec A=\vec a$
(resp. $\vec B=\vec b$) would not be true to begin with''.
But this may not be an appropriate answer to the
question we are actually interested in 
when considering a nested causal statement. The proof of the
Theorem \ref{thm:expressive} is an example where we really want to query
whether the outer cause (corresponding to $\vec C=\vec c$ here)
has an impact on the causal relationship between $\vec A=\vec a$ and
the value of the variable $\vec B$, whatever this value may be.
This indifference towards the value can equally affect the inner cause,
as illustrated by the following formal account of an example from the
introduction.}
Now the values of $a$ and $b$ for which the formula is true depend on
the value $c'$ to which $C$ is set when evaluating the
counterfactual.  As the following example shows, this can play a
critical role.

\begin{example} \label{ex:farmer}
    Let $M$ be the following model of the farmer story from the introduction.
A farmer may relocate to
  Texas ($R=1$) or stay in New York ($R=0$), and
this will impact the level of drought his crops are exposed to ($W=0$ standing
for drought, $W=1$ standing for normality, and $W=2$ standing for flooding).
 If he
  relocates, his crops will suffer dry weather; otherwise, they
  will be flooded: 
$$W=\begin{cases} 2& \text{ if $R=0$} \\ 0&
    \text{otherwise.} \end{cases}$$
 The crops, however, can survive ($C=1$) only
  if the weather is fair ($W=1$): 
  $$C=\begin{cases} 1 & \text{ if $W=1$} \\ 0 & \text{ otherwise.} \end{cases}$$
  Intuitively, assuming the farmer relocated, we hold that the statement ``the weather caused the crops to fail'' is true; but the statement ``because the farmer relocated to Texas, the weather caused the crops to fail'' is false.
Finally, we add a single exogenous variable $U$ with singleton range.
In this context, $R=1$.
  \hfill \wbox
\end{example}

What should the formal interpretation of the nested statement be?
If we take the interpretation of ``the weather caused the crops to fail''
to be $\cause{W=0}{(C=0)}$, then this formula is indeed true (as
$M\vDash [W\leftarrow 1]C=1$). 
If we then interpret the nested statement as
$$\cause{R=1}{(\cause{W=0}{(C=0)})},$$
then this is in fact true as well: $M\vDash [R\leftarrow 0] (W=2)$,
and hence $M\vDash [R\leftarrow 0]\neg (\cause{W=0}{(C=0)}$!
It does not help to interpret the causal formula as
$$\cause{R=1}{(\cause{W=2}{(C=0)})}.$$
This statement is false for vacuous reasons: $R=1$ is not a cause
because $\cause{W=2}{C=0}$ is false, as $W$ is not 2.
What seems to capture this example best is to use of existential
quantification: 
$$ \cause{R=1}{(\exists w.\, \cause{W=w}{(C=0)})}. $$

The common feature of this example and the distinguishing formula
in the proof of Theorem~\ref{thm:expressive} is that we do not
know in advance what value is appropriate to impute to variables
in the inner causal statement: when considering the outer statement,
we need to consider all possible counterfactual scenarios concerning
its cause, but the values of the variables mentioned in the inner
statement may be different in each of those scenarios. 
Therefore, we make the following definition, which allows us to capture
the notion that $A$ taking the situationally appropriate value, whatever it is in
the scenario that we might be considering for a particular causal (sub)formula,
is the cause of the $B$ taking whatever value it happens to take. In the next
section, we will see examples of several security properties that
are expressed more succinctly with this notation, and in some cases
cannot be expressed adequately without the
 existential quantification at all.
\begin{definition}
Suppose $\vec A$ and $\vec B$ are lists of variables of an appropriate causal model.
Let $\causesimp{\vec A}{\vec B}$ denote the formula
$$\exists \vec a\in \s{R}(\vec A), \vec b\in \s{R}(\vec B).\,
\cause{\vec A=\vec a}{(\vec B=\vec b)}.$$ 
We can define
$\cause{\vec A=\vec a}{\vec B}$ and $\causesimp{\vec A}{(\vec B=\vec b)}$
analogously. 
\hfill \wbox
\end{definition}

Using this definition, we can now state the
causal statement of Example~\ref{ex:farmer} in a way that
mirrors the natural-language version, by saying
$\causesimp {R=1} {(\causesimp W {(C=0)})}$,
or even more succinctly as $\causesimp R {(\causesimp W C)}$.
Likewise, the distinguishing formula from the proof
of Theorem~\ref{thm:expressive} can now be stated
as $\causesimp {\textsc{Switch1}=\mathrm{on}} 
{(\causesimp {\textsc{Switch2}=\mathrm{on}} {\textsc{Lamp}})}$,
or more compactly as $\causesimp {\textsc{Switch1}} 
{(\causesimp {\textsc{Switch2}} {\textsc{Lamp}})}$.

\section{Examples of causal security}

Now that we have given the definitions, we are ready
to revisit several examples of propositions about security that we want to
argue are naturally viewed as nested causal statements.

\begin{example}
A government employee has the authority to declassify government secrets
and release them to the press. The employee turns out to be corrupt: if
someone pays him a sufficient amount of money, he will declassify a secret
and have it published in the press. As it happens, a UFO enthusiast
community scrapes together a bribe and pays the employee, who subsequently
publishes the announcement that the government has been in contact
with aliens.
\hfill \wbox
\end{example}

In what sense can we say that something inappropriate occurred?
By assertion, we considered it permissible for the employee to declassify
and release the secret (and thus for the truth about aliens to be a
cause of the press release). In a free market economy, tax regulations 
not withstanding, people are free to give money
to whomever they please. Lastly, had the UFO enthusiasts instead paid
a struggling newspaper directly to announce that the government found UFOs,
this would also not be problematic.
The problem here is that that whether the information was released
depended on whether the bribe was paid. In other words, the problem is
that the bribe was a cause of the government being in contact with aliens
being a cause of the press release.

Formally, we can represent the
example as a causal model $M^{\textrm{aliens}}$ 
with a single exogenous variable $U$ whose range is a singleton,
and three endogenous
binary variables
$S$, $P$, and $B$, representing whether the government is secretly in
contact with 
aliens, whether there is a press article to the effect, and whether the UFO
enthusiasts paid a bribe respectively. Due to the employee's corruption,
we have $$F_P=\begin{cases} S & \text{ if $B$=1} \\ 0 & \text{
otherwise.}\end{cases}$$ 
 The undesirable causal relationship
 then is represented by the formula $\causesimp {B} {(\causesimp S P)}$.

The security property being violated here is an instance of 
Zdancewic and Myers's notion of \emph{robust declassification} \cite{zdancewic01,MSZ04,nmifc}.
Roughly speaking, a declassification (release of a secret) is considered
\emph{robust} if whether the declassification occurred was not up to
an untrusted
actor. 
What is considered a secret and what actors are trusted (to
declassify the 
secret) has to be
specified as part of the security policy.
In causal terms, we can say that a system satisfies robust declassification
if there is no instance of an \emph{untrusted} variable (such as the UFO
enthusiasts' decision to pay) being a cause of a \emph{secret} variable
being a cause of a \emph{public} variable.
Which variables belong to each of the three classes has to be
specified as part of the security policy. Intuitively, secret variables
are those for which we would consider it \emph{a priori} unacceptable
for parties 
unaffiliated with the principal that the security policy seeks to protect
 to learn their value, unless this was
explicitly desired by the system designer. We can assume that their value
is not directly visible to outsiders, for otherwise the system would be trivially insecure.
Public variables are all those that are assumed to be visible to outside
observers. Trusted variables are those whose value is taken to be under 
the control of the principal; untrusted variables  may have had
their value influenced by outsiders whose interests may not
align with those of the principal.

The presence of an untrusted 
variable as a 
cause can turn an otherwise acceptable
causal relationship unacceptable. Conversely,
the presence of a trusted cause can turn an otherwise unacceptable causal
relationship acceptable.

\begin{example} 
Alice's computer-illiterate boss, Bob, has asked Alice to fix his
computer.
While she is at it, she realises that his desktop background
is the default colour (say, white). She decides to set the desktop
background to her favourite colour.  (For simplicity, in the remainder
of the discussion, we asume that there are only two possible colours.)
Consider two scenarios:
\begin{enumerate}
\item Alice is sensitive to the circumstance that she is working
  on somebody else's machine. Her understanding with Bob is that has
  she is entitled to change some setting (like background colour)
  unless Bob explitly tells her not to.
\item Alice is quite fed up with Bob's lack of taste and clueless
  management.
  If Bob were to tell her to leave the desktop background
alone, she would just get spiteful and instead set it to the opposite of her
favourite colour. \hfill \wbox
\end{enumerate}

\end{example}
Our intuition says that in the first case, the colour change was
(implicitly) authorised
by Bob. Were he to complain about it, Alice could rightly respond that
she wouldn't have done it if he had told her not to, and he had ample
opportunity to. 
On the other hand, Bob would not be wrong to complain about her
meddling and insubordination in the second case. This is not just a matter
of control; knowing Alice's behaviour, Bob can make her set the background
to any colour he prefers by tactically choosing whether to tell her to back off.

Formally, we could represent the cases as causal models $M^\mathrm{obedient}$
and $M^\mathrm{defiant}$ with
a single exogenous variable whose range is a singleton, and
 three
binary endogenous variables, $A$ representing Alice's favourite colour,
$B$ representing whether Bob tells Alice 
that it is okay to change the colour,
and $C$ representing the resulting background colour. In the first,
``obedient'' case, we have 
$$F_C^\mathrm{obedient}=\begin{cases} A & \text{ if $B$=1} \\ 1 &
\text{ otherwise.}\end{cases}$$ 
On the other hand, in the ``defiant'' case,
$$F_C^\mathrm{defiant}=\begin{cases} A & \text{ if $B$=1} \\ 1-A &
\text{ otherwise.}\end{cases}$$ 
($F_A$ and $F_B$ just set $A$ and $B$ to Alice and Bob's actual
actions in each case.) 

It is easy to check that these two models are just relabellings of the 
models $\Mand$ and $\Mnxor$ from earlier, respectively; thus,
we have $M^\mathrm{obedient} \vDash \causesimp B {(\causesimp A C)}$, but
$M^\mathrm{defiant} \not\vDash  \causesimp B {(\causesimp A C)}$. More generally,
we could consider this an instance of a security policy that we could call
\emph{authorisation}: the untrusted variable
$A$ 
is only a cause of the privileged outcome $C$ if this causal
relationship itself had
a trusted cause $B$, interpreted as the causation happening at $B$'s
pleasure, 
with $B$ having the option to prevent it and choosing to not making use of it.

The ``authorisation'' construction that we have just described can be easily iterated
to generate more complex meaningful examples.

\begin{example} Suppose Bob is not present during Alice's fixing
of his computer, and instead has told Alice to
let Bob's secretary Dylan supervise her.
Would Alice listen to Dylan if he were to tell her to leave Bob's desktop
background unchanged (which, in fact, he doesn't)?
Once again, consider two cases:
\begin{enumerate}
\item Alice respects Bob's delegation of authority, and sets the desktop
to her preferred background colour only if Dylan doesn't tell her
to leave it alone. If Bob had instead told her not to listen to Dylan, 
she would have strictly acted according to her own best judgement,
and set the desktop background to her preferred colour no matter what he said.
\item Alice thinks much more highly of Dylan than the boss they
work for, and
will listen to him even if Bob tells her not to. As it happens, Bob
trusts Alice's artistic judgement much more than Dylan's,
and will be quite displeased to hear that his overbearing underling
stopped Alice from setting him up with an artfully chosen
background.
\end{enumerate} 
\hfill \wbox
\end{example}
In both scenarios, the final setting of the desktop background is caused
by Alice's preferred colour, and this causal relationship in turn
is caused by Dylan's acquiescence. What intuitively 
distinguishes the two scenarios (and would continue distinguishing them
if Alice's defiance were to come to the fore under some combinations of
Bob's and Dylan's instructions) is whether Dylan's control over
this itself was ``at Bob's pleasure'', that is, could have been
vetoed by Bob, or Bob's authority was usurped.
Formally, we can capture them as two causal models,
both with a single exogenous variable whose range is a singleton,
four endogenous variables, and 
\begin{eqnarray*}
F_C^1&=&\begin{cases} A & \text{ if $B=1$ and $D=1$} \\ 1 & \text{ else}\end{cases} \\
F_C^2&=&\begin{cases} A & \text{ if $D=1$} \\ 1 & \text{ else}\end{cases}. 
\end{eqnarray*}
We then find while $M^1 \vDash \causesimp B {(\causesimp D
    {(\causesimp A C))}}$, 
we have $M^2 \not\vDash \causesimp B {(\causesimp D {(\causesimp A C))}}$.

This construction can be iterated further in a straightforward manner, allowing
us to express any number of steps of delegation. Such chains of delegation are
often considered in \emph{authorisation logics} (see e.g. Abadi's survey \cite{abadiacl}),
but rarely given a formal semantics \cite{andrewauthl}, let alone one
that can be applied in 
a setting as general as causal models.

\section{Conclusions}

We have shown that a variety of security properties can be expressed as
nested causal statements. To give a formal account of such statements,
we extended the Halpern-Pearl framework of causality to allow formulae
that may themselves refer to causal relationships as effects of causal
statements. As we have shown, the language of nested causal statements
thus obtained is more expressive than the language of simple causal statements
that the HP framework normally deals in. This extension also led us to
revisit a particular design assumption of the HP framework, namely
that it is always appropriate to have causal statements refer to variables
taking particular values as causes and effects. We have argued that in nested 
causal statements, it is often more natural to implicitly existentially quantify
over values, using just the variables as causes and effects (interpreted
as meaning that whatever value the variable actually takes is the cause or effect).

We view this paper as laying the groundwork for the characterization
of security properties using (nested) causality.  
In future work, we plan to formalize various security properties using
causality,
show how to represent arbitrary programs as causal models,
and use these causal models to provide efficient sound procedures for verifying
that the programs satisfy these security properties.

\appendix

\section*{Appendix: The power of circular causal statements}
\setcounter{section}{1}
\label{app:power}

\begin{proposition} \label{prop:circlepower}
Let $(M^1,\vec u_1)$ and $(M^2,\vec u_2)$ be two contexts of recursive models with the same signature $( \s U,\s V,\s R)$, same immediate dependency ordering, same unique consistent assignment $\vec v$ and different structural equations.
Then there is a simple causal formula $\cause{\vec X=\vec x}\phi \in \LCone$ that is valid in $(M^1,\vec u_1)$ but not in $(M^2,\vec u_2)$.
\end{proposition}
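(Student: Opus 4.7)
The plan is to construct a circular causal formula of the form
$$\cause{\bigwedge_{W \in S} W = \vec v(W)}{\psi_S}$$
mirroring the distinguishing formula in the Remark, with the set $S \subseteq \s V$ and the disjunction-like $\psi_S$ chosen so that the difference between $M^1$ and $M^2$ surfaces as a failure of AC3 (minimality) in exactly one model.

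First, I would pin down where the models diverge. Fixing a topological order on $\s V$ compatible with the shared immediate-dependency graph, let $V^*$ be the earliest variable at which $F_{V^*}^1 \neq F_{V^*}^2$. Because all preceding equations coincide, interventions on earlier variables produce identical consequences in the two models up to $V^*$'s parents, so the disagreement is isolated to $V^*$'s equation. Let $\vec Y$ be $V^*$'s immediate endogenous dependencies and pick values $\vec y^*$ (with an exogenous assignment reachable from the given contexts) at which $F_{V^*}^1(\vec y^*, \vec u) \neq F_{V^*}^2(\vec y^*, \vec u)$. Write $a_i$ for the resulting post-intervention value of $V^*$ in $(M^i, \vec u_i)$; then $a_1 \neq a_2$, and WLOG $a_1 \neq \vec v(V^*)$.

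Next, I would take $S = \vec Y \cup \{V^*\}$ and choose $\psi_S$ depending on $a_2$. If $a_2 = \vec v(V^*)$ (Case A), take $\psi_S = \bigvee_{W \in S} W = \vec v(W)$, precisely as in the Remark. If $a_2 \neq \vec v(V^*)$ (Case B, available when $|\s R(V^*)| \geq 3$), take $\psi_S = (V^* = a_1) \vee \bigvee_{W \in \vec Y} W = \vec v(W)$ to isolate the $M^1$-specific value $a_1$. AC1 is immediate, and AC2 for $\vec X = S$ holds in both models via an intervention sending each variable in $S$ to a non-actual value (avoiding $a_1$ on the $V^*$-coordinate in Case B). The subset $\vec Y \subsetneq S$ together with the witness intervention $\vec Y \gets \vec y^*$ then witnesses an AC3 violation in exactly one of the two models (in $M^1$ for Case A, in $M^2$ for Case B), yielding opposite truth values for the overall formula.

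The main obstacle will be the AC3 verification in the ``good'' direction: ruling out \emph{any} other proper subset of $S$ with any intervention from also satisfying AC2. The topological earliness of $V^*$ is crucial here: any subset leaving a variable of $\vec Y$ free computes its value via an equation shared by both models, so the ``actual vs.\ non-actual'' status of that variable is identical in $M^1$ and $M^2$, and AC3-relevant differences can only flow through $V^*$'s equation, whose behaviour is encoded into $\psi_S$ by construction. Additional technical care is needed when $\vec Y$ has internal dependencies (refine the witness to be minimal among such subsets), for binary-range variables (where only Case A is available), and for the case $\vec u_1 \neq \vec u_2$ (where a single witness must be simultaneously reachable from both contexts).
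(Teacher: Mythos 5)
Your setup matches the paper's: both proofs locate a topologically minimal variable $V^*$ (the paper's $D$) where the equations differ, pick an input $\vec y^*$ to its parents witnessing the difference, and build a circular causal formula whose truth value is meant to diverge via AC3. But the effect formula you choose breaks the argument in two places. First, your claimed AC3 witness $\vec Y \gets \vec y^*$ need not falsify $\psi_S$: the disjuncts in $\psi_S$ test $W = \vec v(W)$ against the \emph{actual} values, and $\vec y^*$ may agree with $\vec v$ on some coordinates of $\vec Y$ (only one coordinate is guaranteed to differ), leaving those disjuncts true. Second, and more seriously, in the model where the formula is supposed to come out \emph{true} you have no control over AC3: since $\psi_S$ only requires each parent to be non-actual (rather than equal to the specific input $\vec y^*$), any of the many assignments $\vec y$ with all coordinates non-actual and $F^i_{V^*}(\vec y) \neq \vec v(V^*)$ (Case A) or $\neq a_1$ (Case B) gives a proper subset $\vec Y \subsetneq S$ satisfying AC2, so AC3 can fail in \emph{both} models and the formula is then false in both. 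You flag the AC3 verification as the ``main obstacle,'' but it is not merely technical: for a generic pair of differing equations there is no reason it goes through.

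The paper's construction resolves exactly this. Its effect is $\neg(D = F^1_D(\vec u_1,\vec v')) \vee \bigvee_{Y} \neg(Y = \vec v'(Y))$, which pins the parents to the \emph{specific} differing input $\vec v'$ and the output to the \emph{$M^1$-specific} value. Falsifying it requires all parents to equal $\vec v'$ and $D = F^1_D(\vec v')$ simultaneously, which is impossible in $M^2$ under any intervention on any subset of the parents (since then $D = F^2_D(\vec v') \neq F^1_D(\vec v')$); so no subset is a cause in $M^2$. On the $M^1$ side the paper does not try to prove AC3 for the full parent set at all: it observes that either the full formula holds or, if AC3 fails, some proper sub-formula is a cause, and since \emph{no} sub-formula is a cause in $M^2$, whichever one survives distinguishes the models. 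You would need both ingredients --- pinning the effect to $(\vec v', F^1_D(\vec v'))$ and the ``pass to a subset'' escape --- to close the gap; the case split on $a_2$ and the inclusion of $V^*$ in the cause set then become unnecessary.
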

\begin{proof}
    Let $D$ be  a minimal variable in the dependency ordering
    (which is th same for both models) 
whose structural equation takes different values in $(M^1,\vec u_1)$
and $(M^2,\vec u_2)$ on some input (assignment to endogenous variables
that the structural equation depends on) $\vec v'$, say $F^1_D(\vec
u_1, \vec v')\neq F^2_D(\vec u_2,\vec v')$, and take $\vec X\subseteq
\s V$ to be the strict descendants of $D$ in the dependency ordering.
Let $\vec x$ be the corresponding entries of $\vec v$,
and let
$$\phi=\neg (D=F^1_D(\vec u_1,\vec v')) \vee \bigvee_{Y\in \vec
X}  \neg (Y=\vec v'(Y)).$$
That is, $\phi$ says that $D$ or one of its descendants has a
different value than it would have on input  $\vec v'$. 

Then $(M^1,\vec u_1) \vDash \vec X = \vec x$ and $(M^2,\vec u_2) \vDash \vec X=\vec x$ by construction, and $(M^1,\vec u_1) \vDash \phi$, $(M^2,\vec u_2) \vDash \phi$ 
because $\vec v\neq \vec v'$ (recall we are assuming that the two models have the same unique consistent assignment, so $F^1_D(\vec v)=F^2_D(\vec v)=\vec v(D)$) and so at least one of the disjuncts must be true in both models (which are only consistent with the assignment $\vec v$).
Moreover, $(M^1,\vec u_1) \vDash [\vec X \leftarrow \vec x'] \neg \phi$, where $\vec x'$ are the entries of $\vec v'$ corresponding to $\vec X$: the clauses of the disjunction are satisfied by the intervention directly setting $Y=\vec v'(Y)$ for all $Y\in \vec X$, and $D=F^1_D(\vec v')$ as a consequence of the structural equations of $M^1$. So either $(M^1,\vec u_1) \vDash \cause{\vec X=\vec x}\phi$, or this only fails AC3 and so there is some proper subvector $\vec X'$ such that $(M^1,\vec u_1) \vDash \cause{\vec X'=\vec x}\phi$ for which this is true.
However, for all $\vec x''$, $(M^2,\vec u_2) \vDash [\vec X \leftarrow \vec x''] \phi$: either $\vec x'' \neq \vec x'$, in which case one of the disjuncts about $Y\in \vec X$ is true in $(M^2_{\vec X\leftarrow \vec x''},\vec u_2)$, or $\vec x''=\vec x'$, in which case we have $(M^2_{\vec X\leftarrow \vec x''},\vec u_2)\vDash D=F^2_D(\vec v')$, and hence $\neg (D=F^1_D(\vec v'))$. Since this is true for all values we could assign to $\vec X$, restricting to a subvector $\vec X'$ of $\vec X$ will not help, and $(M^2\vec u_2)\not\vDash \cause{\vec X'=\vec x}\phi$ for all $\vec X'\subseteq \vec X$.
\end{proof}

Note that if the two models do not have the same unique consistent
assignment $\vec v$, we don't even need causal statements to
distinguish them: just use a single atomic proposition  about a
variable on which 
their consistent assignments differ.

\bibliographystyle{named}
\bibliography{secondorder}

\begin{thebibliography}{}

\bibitem[\protect\citeauthoryear{{Abadi}}{2003}]{abadiacl}
M.~{Abadi}.
\newblock Logic in access control.
\newblock In {\em Proc.~18th IEEE Symposium on Logic in Computer Science},
  pages 228--233, 2003.

\bibitem[\protect\citeauthoryear{Cecchetti \bgroup \em et al.\egroup
  }{2017}]{nmifc}
E.~Cecchetti, A.~C. Myers, and O.~Arden.
\newblock Nonmalleable information flow control.
\newblock In {\em Proc.~2017 ACM SIGSAC Conference on Computer and
  Communications Security}, 2017.

\bibitem[\protect\citeauthoryear{Goguen and Meseguer}{1982}]{GM82}
J.~A. Goguen and J.~Meseguer.
\newblock Security policies and security models.
\newblock In {\em Proc.~IEEE Symposium on Security and Privacy}, pages 11--20.
  1982.

\bibitem[\protect\citeauthoryear{{Haigh} and {Young}}{1987}]{HY87}
J.~T. {Haigh} and W.~D. {Young}.
\newblock Extending the noninterference version of {MLS} for {SAT}.
\newblock {\em IEEE Transactions on Software Engineering}, SE-13(2):141--150,
  1987.

\bibitem[\protect\citeauthoryear{Halpern and Pearl}{2001}]{HPearl01a}
J.~Y. Halpern and J.~Pearl.
\newblock Causes and explanations: A structural-model approach. {P}art {I}:
  {C}auses.
\newblock In {\em Proc.~Seventeenth Conference on Uncertainty in Artificial
  Intelligence (UAI 2001)}, pages 194--202, 2001.

\bibitem[\protect\citeauthoryear{Halpern and Pearl}{2005}]{HP01b}
J.~Y. Halpern and J.~Pearl.
\newblock Causes and explanations: a structural-model approach. {P}art {I}:
  {C}auses.
\newblock {\em British Journal for Philosophy of Science}, 56(4):843--887,
  2005.

\bibitem[\protect\citeauthoryear{Halpern}{2015}]{Hal47}
J.~Y. Halpern.
\newblock A modification of the {H}alpern-{P}earl definition of causality.
\newblock In {\em Proc.~24th International Joint Conference on Artificial
  Intelligence (IJCAI 2015)}, pages 3022--3033, 2015.

\bibitem[\protect\citeauthoryear{Halpern}{2016}]{Hal48}
J.~Y. Halpern.
\newblock {\em Actual Causality}.
\newblock MIT Press, Cambridge, MA, 2016.

\bibitem[\protect\citeauthoryear{Hirsch and Clarkson}{2013}]{andrewauthl}
A.~K. Hirsch and M.~R. Clarkson.
\newblock Belief semantics of authorization logic.
\newblock In {\em Proc~2013 ACM SIGSAC Conference on Computer \& Communications
  Security}, 2013.

\bibitem[\protect\citeauthoryear{Myers \bgroup \em et al.\egroup
  }{2004}]{MSZ04}
A.~C. Myers, A.~Sabelfeld, and S.~Zdancewic.
\newblock Enforcing robust declassification.
\newblock In {\em Proc.~17th IEEE Computer Security Foundations Workshop},
  pages 172--186, 2004.

\bibitem[\protect\citeauthoryear{Sabelfeld and Myers}{2003}]{SabelfeldMyers03}
A.~Sabelfeld and A.~C. Myers.
\newblock Language-based information-flow security.
\newblock {\em IEEE J. Selected Areas in Communications}, 21(1):5--19, 2003.

\bibitem[\protect\citeauthoryear{Zdancewic and Myers}{2001}]{zdancewic01}
S.~Zdancewic and A.~C. Myers.
\newblock Robust declassification.
\newblock In {\em Proc.~14th IEEE Computer Security Foundations Workshop},
  pages 15--23, 2001.

\end{thebibliography}

\end{document}